\newtheorem{definition}{Definition}
\newtheorem{theorem}{Theorem}
\newtheorem{lemma}{Lemma}
\newtheorem{proposition}{Proposition}
\title{
Counterfactual Basis Extension and Representational Geometry: An MDL-Constrained Model of Conceptual Growth
}
\author*[1]{Chainarong Amornbunchornvej}
 \affil*[1]{National Electronics and Computer Technology Center,
 112 Phahonyothin Road, Khlong Nueng, Pathum Thani, 12120, Thailand\\
 \email{chainarong.amo@nectec.or.th}}
\begin{document}

\maketitle

\begin{abstract}

Concept learning becomes possible only when existing representations fail to
account for experience. Most models of learning and inference, however,
presuppose a fixed representational basis within which belief updating occurs.
In this paper, I address a prior question: under what structural conditions can
the representational basis itself expand in a principled and selective way?

I propose a geometric framework in which conceptual growth is modeled as
admissible basis extension evaluated under a Minimum Description Length (MDL)
criterion. Experience, whether externally observed or internally simulated, is
represented as vectors relative to a current conceptual subspace. Residual
components capture systematic representational failure, and candidate
conceptual extensions are restricted to low-rank, admissible transformations.
I show that any MDL-accepted extension can be chosen so that its novel directions
lie entirely within the residual span induced by experience, while extensions
orthogonal to this span strictly increase description length and are therefore
rejected.

This yields a conservative account of imagination and conceptual innovation.
Internally generated counterfactual representations contribute to learning only
insofar as they expose or amplify structured residual error, and cannot
introduce arbitrary novelty. I further distinguish
\emph{representational counterfactuals}--counterfactuals over an agent's
conceptual basis--from causal or value-level counterfactuals, and show how MDL
provides a normative selection principle governing representational change.

Overall, the framework characterizes conceptual development as an error-driven,
geometry-constrained process of basis extension, clarifying both the role and
the limits of imagination in learning and theory change.
\end{abstract}

\section{Introduction}

Most models of learning and inference assume that the representational basis is
fixed: updating beliefs, parameters, or hypotheses occurs \emph{within} a given
feature space or hypothesis class. In this paper, I study a prior question: how
the \emph{representational basis itself} can change in a constrained and
selective way.

I propose a geometric model in which new conceptual directions arise through
\emph{counterfactual basis extension}: internally simulated representational
changes evaluated under a Minimum Description Length (MDL) criterion and
restricted by admissibility constraints. The model treats imagination as
counterfactual exploration \emph{over representations} rather than over values
or interventions on fixed variables. Conceptual novelty is conservative: it is
admitted only when a low-dimensional extension yields a net compression gain.

My point of departure is the cognitive-geometric perspective that meaning and
inference can be analyzed in terms of positions, projections, and
representation-dependent transformations in structured spaces
\cite{amornbunchornvej2025interpretation,gardenfors2004conceptual,gardenfors2014geometry,kriegeskorte2013representational,lappin2023geometrical}.
Within this viewpoint, learning corresponds not only to moving within a space,
but also to changing which directions are representationally available.

\paragraph{Contributions.}
\begin{itemize}[leftmargin=*]
  \item I formalize an admissible class of basis extensions capturing
        continuity/invariants and bounded growth, making explicit what kinds of
        conceptual change are permitted.
  \item I give an MDL-based acceptance rule for basis extension, separating
        \emph{genuine novelty} from arbitrary reparameterization.
  \item I introduce a representation-level notion of counterfactual reasoning
        (counterfactuals over \emph{spaces}), and position it as complementary to
        causal/value-level counterfactual frameworks.
\end{itemize}

\paragraph{Scope.}
This paper focuses on representational structure and selection principles, not on
phenomenology or consciousness. I also do not claim that large language models
explicitly optimize MDL or maintain explicit conceptual subspaces; rather, I use
contemporary representation learning as a motivating comparison class for the
kinds of constraints the theory isolates
\cite{saxe2019mathematical,geva2022transformer,piantadosi2024concepts}.

The results should be read as necessary conditions on representational change
under compression-based selection, not as a descriptive model of human
cognition.

\section{Related Work}

\subsection{Geometric and Vector-Space Models of Concepts}

A long tradition in cognitive science and semantics represents meanings as points
or regions in geometric spaces rather than as abstract symbols. In
\emph{conceptual spaces} theory, concepts are modeled as regions in continuous
metric spaces spanned by cognitively meaningful quality dimensions such as
color, shape, or size \cite{gardenfors2004conceptual,gardenfors2014geometry}.
Within such frameworks, similarity corresponds to spatial proximity and
categorization to region membership, yielding accounts that align well with
empirical findings on human concept learning and typicality effects.

This geometric perspective has gained renewed support from recent work arguing
that high-dimensional vector representations are particularly well suited to
capture the richness and flexibility of human concepts. Vector-based models can
encode graded similarity, compositional structure, and systematic relations
between concepts in ways that are difficult to achieve with purely symbolic
representations \cite{piantadosi2024concepts}. Such representations provide a
natural substrate for abstraction, generalization, and conceptual combination.

Beyond cognitive theory, \emph{representational geometry} has become a central
tool in neuroscience and artificial intelligence for comparing internal
representations across brains, models, and behavior. Techniques such as
representational similarity analysis characterize representations via distance
or similarity structure, enabling principled geometric comparisons across
systems \cite{kriegeskorte2013representational}. Subsequent work has shown that
learning systematically reshapes representational geometry in both biological
and artificial systems, linking changes in geometry to behavioral performance
and task structure
\cite{greco2024predictive,wei2025representational}.

My framework aligns with these geometric approaches but addresses a distinct
question. Building on a cognitive--geometric account in which interpretation is
modeled as a linear transformation between representational spaces
\cite{amornbunchornvej2025interpretation}, I ask when and why the
\emph{representational space itself} should be extended with new dimensions.
Whereas prior work typically assumes a fixed representational basis and studies
the geometry \emph{within} it, I focus on principled conditions for
representational growth rather than representational arrangement alone.

\subsection{Representation Learning and Feature Discovery}

In machine learning, a large literature studies how internal representations and
high-level features emerge automatically from data. Deep neural networks, in
particular, have been shown to develop vector representations that mirror
semantic structure observed in human cognition
\cite{saxe2019mathematical}. Theoretical analyses demonstrate that learning
dynamics in deep networks can recapitulate patterns of semantic development,
including hierarchical differentiation and category structure.

Empirically, many studies document that as training progresses, hidden layers
come to encode increasingly interpretable features, such as object categories
or linguistic distinctions. This suggests that useful representational
dimensions can emerge in an unsupervised or weakly supervised manner under
predictive or compressive objectives. Complementary work seeks to interpret and
simplify such representations. For example, methods that reduce contextualized
embeddings to static vector spaces reveal latent semantic axes embedded within
complex models \cite{bommasani2020interpreting}. Mechanistic analyses of
transformer architectures further suggest that certain components function as
structured storage or retrieval mechanisms, with identifiable key--value
structure \cite{geva2022transformer}.

Most of this literature focuses on optimizing parameters or interpreting features
within a fixed representational architecture. By contrast, my work addresses a
higher-level question: when should the representation itself acquire new basis
dimensions? Rather than assuming a static feature space, I propose a normative
principle governing representational expansion, shifting attention from weight
adaptation to basis selection under explicit complexity cost.

\subsection{MDL and Simplicity Principles}

The Minimum Description Length (MDL) principle, originally introduced by~\cite{rissanen1978modeling} and later developed systematically
\cite{grunwald2007minimum}, formalizes Occam’s razor by selecting
representations that optimally trade off goodness of fit against
representational complexity. Classical applications of MDL include model
selection, hypothesis testing, and structure discovery, where models are
preferred if they yield shorter joint descriptions of model and data. Beyond
these settings, MDL has also been applied as an operational criterion for
identifying parsimonious structure in real-world, multi-resolution datasets
\cite{amornbunchornvej2021identifying}.

In the present work, I extend MDL from parameter selection within a fixed model
class to \emph{representational selection}. Rather than assuming a static
hypothesis space, I use MDL to evaluate hypothetical extensions of the
representational basis itself. In this role, MDL serves two functions. First, a
new representational dimension is accepted only if it yields a strict reduction
in total description length. Second, representational growth incurs an explicit
complexity cost, ruling out arbitrary or unconstrained forms of creativity.

This use of simplicity constraints is adjacent in spirit to broader concerns about how learning systems acquire compact and robust internal structure from data, particularly in light of dataset design and usage practices~\cite{paullada2021data}.
My focus, however, is on the normative conditions
under which representational change is rationally warranted given a fixed
dataset. MDL thus functions as a principled gatekeeper for conceptual growth,
ensuring that representational expansion remains conservative and data-driven.

\subsection{Counterfactual Reasoning}

Classical treatments of counterfactuals analyze hypothetical alternatives to
reality using fixed variables and possible worlds. In philosophy, counterfactual
claims are evaluated by considering the closest possible worlds in which the
antecedent holds \cite{lewis1973counterfactuals}. In causal inference,
counterfactual queries are formalized as interventions on structural causal
models, allowing predictions of alternative outcomes under fixed causal
relations \cite{pearl2009causality,woodward2003making}. Psychological research
emphasizes that human counterfactual thinking often focuses on salient deviations
from reality, such as near misses or norm violations
\cite{kahneman1986norm,roese1997counterfactual}. Recent philosophical critiques
have argued that artificial imagination cannot be genuinely unconstrained, but
must remain bounded by its technical and ontological
conditions~\cite{hui2023imagination}. I provide a formal counterpart to this
intuition, showing that under MDL constraints, representational imagination is
necessarily limited by residual structure.

My contribution is orthogonal to these traditions. I introduce
\emph{representational counterfactuals}: counterfactuals over the agent’s
conceptual basis itself rather than over variable values within a fixed
representation. A representational counterfactual asks whether introducing a new
representational dimension would improve the agent’s ability to compress and
explain experience. By evaluating such hypothetical basis extensions under an
MDL criterion, my framework provides a principled account of how imagination and
counterfactual reasoning can drive conservative yet meaningful conceptual
growth.

\section{Theory: Counterfactual Basis Extension}

This section formalizes learning and imagination as a unified geometric process:
\emph{basis extension driven by compression pressure}. Both external experience
and internally generated data are treated as representational inputs evaluated
under a common Minimum Description Length (MDL) criterion.

\subsection{Conceptual Space and Residual Structure}

I model an agent’s cognitive state as a finite-dimensional conceptual space.

\begin{definition}[Conceptual Space]
A \emph{conceptual space} is a finite-dimensional real vector space
\[
\mathcal{C} = \mathrm{span}\{b_1,\dots,b_k\},
\]
whose basis vectors represent the agent’s available conceptual distinctions.
\end{definition}

Incoming experience is evaluated relative to this space.

\begin{definition}[Residual]
Given an experience vector $u\in\mathcal H$ and orthogonal projection
$\Pi_{\mathcal C}$ onto $\mathcal C$, the \emph{residual} is
\[
r_{\mathcal C}(u):=u-\Pi_{\mathcal C}u.
\]
\end{definition}

Residuals capture structured representational failure: components of experience
that cannot be expressed using existing conceptual distinctions.

\subsection{MDL-Governed Conceptual Change}

Learning is governed by a Minimum Description Length objective.

\begin{definition}[MDL Acceptance]
An admissible extension $\mathcal C'\supseteq\mathcal C$ is \emph{MDL-accepted} if
\[
L(\mathcal C';D)<L(\mathcal C;D),
\]
where
\[
L(\mathcal S;D)
=
\sum_{u\in D}\ell(\|u-\Pi_{\mathcal S}u\|^2)+\lambda\dim(\mathcal S),
\]
$\ell$ is nondecreasing, and $\lambda>0$ penalizes representational complexity.
\end{definition}

\subsection{Counterfactual Basis Extension}

\begin{theorem}[Counterfactual Basis Extension under MDL]
\label{thm:cbe}
Let $\mathcal C\subseteq\mathcal H$ be the agent’s current conceptual subspace and
$D$ a finite multiset of experience vectors. Define the residual span
\[
W:=\mathrm{span}\{r_{\mathcal C}(u):u\in D\}\subseteq\mathcal C^\perp.
\]

Where $C^\perp$ is the set of all vectors orthogonal to every vector in $C$. Under mild structural admissibility conditions on admissible extensions
(formalized in Appendix~\ref{app:ptc-proof}), any MDL-accepted conceptual update
must satisfy:
\begin{enumerate}
\item \emph{(Low-rank novelty)} Novel conceptual dimensions are bounded in number.
\item \emph{(Residual-supported acceptance)} Accepted novelty directions can be
chosen to lie entirely in $W$.
\item \emph{(No orthogonal gain)} Novelty orthogonal to $W$ strictly increases
description length and is never MDL-accepted.
\end{enumerate}
\end{theorem}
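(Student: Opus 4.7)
\medskip
\noindent\textbf{Proof proposal.} The plan is to reduce all three claims to a single geometric substitution lemma: any admissible extension can be ``projected into $W$'' without worsening fit and without increasing dimension. Concretely, write $\mathcal{C}'=\mathcal{C}\oplus V$ with $V\subseteq\mathcal{C}^\perp$, and use the orthogonal decomposition $\mathcal{C}^\perp = W\oplus(W^\perp\cap\mathcal{C}^\perp)$. For each $u\in D$, since $V\subseteq\mathcal{C}^\perp$, orthogonality gives
\[
\|u-\Pi_{\mathcal{C}'}u\|^2 = \|r_{\mathcal{C}}(u)-\Pi_V\,r_{\mathcal{C}}(u)\|^2,
\]
and by construction $r_{\mathcal{C}}(u)\in W$. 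Thus the per-datum fit term depends on $V$ only through how $V$ interacts with vectors of $W$.

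The key lemma is then: for any $w\in W$ and any $v\in V$, decomposing $v=\Pi_W v+\Pi_{W^\perp}v$ yields $\|w-v\|^2=\|w-\Pi_W v\|^2+\|\Pi_{W^\perp}v\|^2\ge\|w-\Pi_W v\|^2$. Taking infima over $v\in V$ (resp.\ over $\tilde v\in\widetilde V:=\Pi_W(V)\subseteq W$) shows $\|w-\Pi_V w\|^2\ge\|w-\Pi_{\widetilde V}w\|^2$. Hence replacing $V$ by $\widetilde V$ weakly decreases each residual and satisfies $\dim\widetilde V\le\dim V$, so $L(\mathcal{C}\oplus\widetilde V;D)\le L(\mathcal{C}';D)$. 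Applied to any MDL-accepted $\mathcal{C}'$, this produces an accepted extension whose novel directions lie entirely in $W$, yielding claim (2). Claim (1) is then immediate since $\dim\widetilde V\le\dim W\le|D|$, giving an explicit $|D|$-bound on novel dimensions. For claim (3), if $V\subseteq W^\perp$ then $\widetilde V=\{0\}$, so $\Pi_V r_{\mathcal{C}}(u)=0$ for every $u\in D$ and the fit term is unchanged while $\lambda\dim V>0$; hence $L$ strictly increases.

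The main obstacle I anticipate is the admissibility condition. The substitution argument only gives an MDL-accepted extension in $W$ if the replacement $\mathcal{C}\oplus\widetilde V$ is itself admissible. The ``mild structural admissibility conditions'' referenced in the theorem must therefore be chosen so that admissibility is closed under (i) orthogonal projection of the novel subspace into $W$ and (ii) passage to a subspace of smaller dimension. Natural choices — e.g.\ closure under orthogonal decomposition of the novel complement together with an upper bound on $\dim V$ — suffice, and checking that these are preserved by the map $V\mapsto\Pi_W(V)$ is where the bulk of the technical care goes. Everything else (the bound $\dim W\le|D|$, the Pythagorean step, and the strict inequality in (3)) is a direct consequence of the orthogonal decomposition of $\mathcal{H}$ relative to $\mathcal{C}$ and $W$.
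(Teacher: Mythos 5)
Your proposal is correct and reaches all three conclusions, but it deviates from the paper's route at the key substitution step. The paper's Appendix B builds the admissibility class $\mathcal T(\mathcal C)$ so that every novelty subspace $U$ is \emph{$W$-reducible}, i.e.\ $U = (U\cap W)\oplus(U\cap W^\perp)$, and then replaces $U$ by the \emph{intersection} $U\cap W$. Under reducibility this leaves the projections of the data exactly unchanged (Lemma~\ref{lem:fit_depends_on_W}: $\Pi_{\mathcal C'}u=\Pi_{\widehat{\mathcal C}}u$), so the fit term is identical and only the complexity term shrinks. You instead replace $V$ by the \emph{orthogonal image} $\widetilde V:=\Pi_W(V)$ and prove a Pythagorean inequality $\|w-\Pi_V w\|^2\ge\|w-\Pi_{\widetilde V}w\|^2$ for $w\in W$. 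This is more general: it does not need $V$ to be $W$-reducible, and it yields a weak \emph{improvement} in fit rather than exact equality. The cost is that your admissibility closure must be under $V\mapsto\Pi_W(V)$ rather than under $U\mapsto U\cap W$, which you correctly flag as the place where the technical care lies; the paper sidesteps this by baking reducibility into Definition~\ref{def:admissible_extensions}(2), at which point the two operators coincide. One small mismatch: for claim~(1) the paper simply assumes a rank bound $r$ as part of admissibility (so the bound applies to the original $\mathcal C'$), whereas your $\dim\widetilde V\le\dim W\le|D|$ only bounds the novelty of the \emph{replacement} $\mathcal C\oplus\widetilde V$; if you want an unconditional bound on every accepted $\mathcal C'$, you still need a rank cap in the admissibility class itself, as the paper does. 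Your treatment of claim~(3) matches the paper's Proposition~\ref{prop:no_gain_W_perp} essentially verbatim.
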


\paragraph{Interpretation.}
Theorem~\ref{thm:cbe} formalizes learning as an error-driven geometric process.
Conceptual growth occurs only through the resolution of structured residual
error. New concepts are not introduced arbitrarily, but inherit their structure
from systematic failures of prediction.

\begin{figure}[t]
  \centering
  \includegraphics[width=\linewidth]{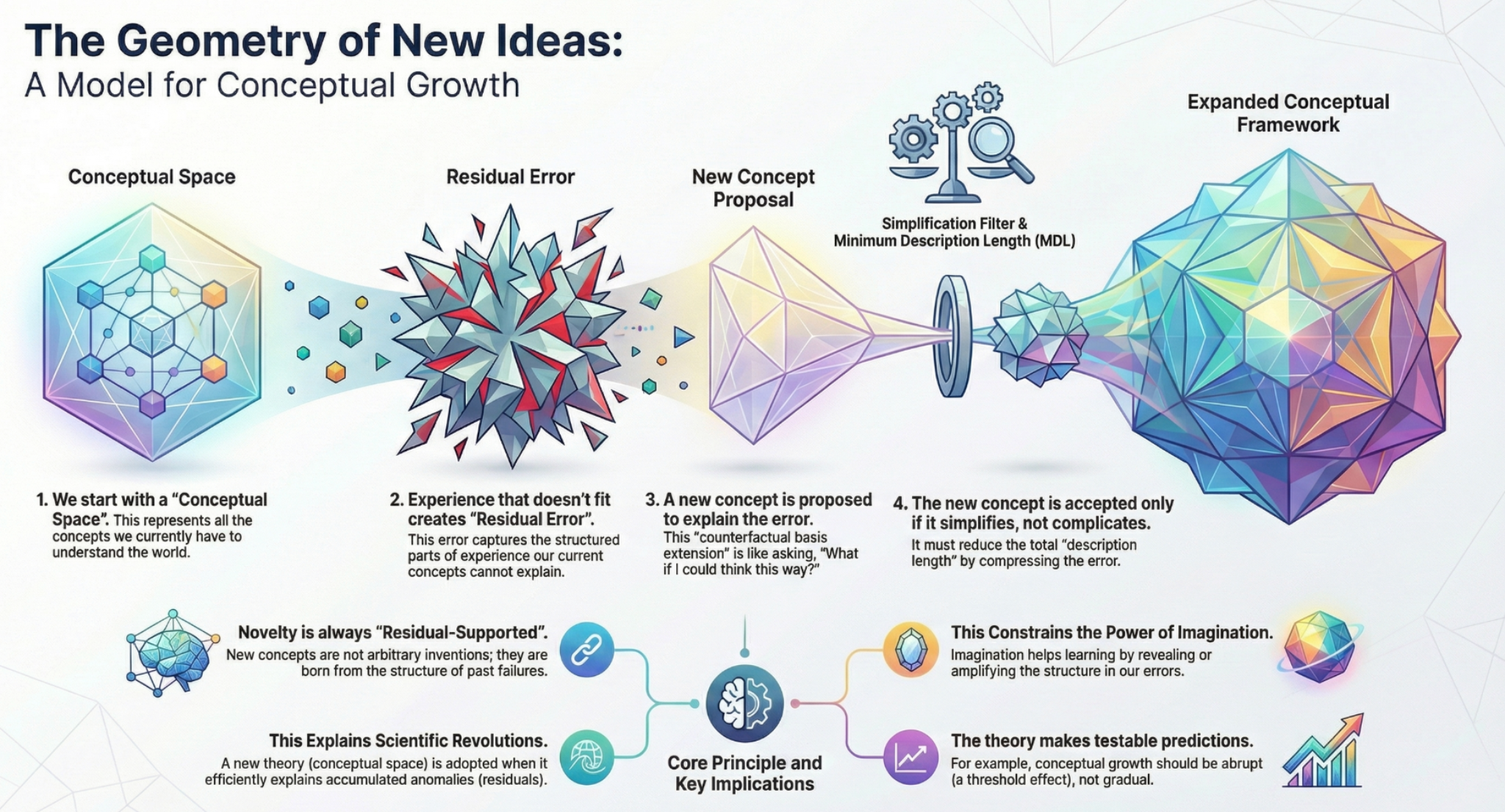}
  \caption{Schematic illustration of counterfactual basis extension under MDL.
  Residual error induces candidate novelty directions, which are filtered by
  description length reduction. Conceptual growth occurs only when a proposed
  basis extension compresses structured residuals.}
  \label{fig:geometry}
\end{figure}

\subsection{When Is Novelty Accepted?}

\begin{proposition}[MDL Acceptance Threshold]
\label{prop:mdl-threshold-general}
A one-dimensional admissible extension is MDL-accepted if and only if the
aggregate reduction in residual loss exceeds the fixed complexity penalty
$\lambda$.
\end{proposition}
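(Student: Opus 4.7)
The plan is to unfold both sides of the MDL-acceptance inequality using the definition of $L(\mathcal S;D)$ and reduce the statement to a one-line algebraic identity. For a one-dimensional admissible extension I write $\mathcal C'=\mathcal C+\mathrm{span}\{v\}$ with $v\notin\mathcal C$, so that $\dim(\mathcal C')=\dim(\mathcal C)+1$. The acceptance condition $L(\mathcal C';D)<L(\mathcal C;D)$ then expands to
\[
\sum_{u\in D}\ell(\|u-\Pi_{\mathcal C'}u\|^2)+\lambda\bigl(\dim(\mathcal C)+1\bigr)
<
\sum_{u\in D}\ell(\|u-\Pi_{\mathcal C}u\|^2)+\lambda\dim(\mathcal C).
\]
Cancelling $\lambda\dim(\mathcal C)$ on both sides and isolating $\lambda$ yields the equivalent inequality
\[
\sum_{u\in D}\ell(\|u-\Pi_{\mathcal C}u\|^2)-\sum_{u\in D}\ell(\|u-\Pi_{\mathcal C'}u\|^2)>\lambda,
\]
which is exactly the claim that the aggregate reduction in residual loss exceeds the complexity penalty $\lambda$. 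Because every step is a reversible algebraic manipulation, both directions of the biconditional are established simultaneously.

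I would also record a short well-posedness remark: since $\mathcal C\subseteq\mathcal C'$, orthogonal projection satisfies $\|u-\Pi_{\mathcal C'}u\|\le\|u-\Pi_{\mathcal C}u\|$ for every $u\in D$, and monotonicity of $\ell$ then makes each summand of the difference on the left-hand side nonnegative. The proposition therefore says that acceptance occurs exactly when these nonnegative per-experience compression gains aggregate to more than the single-dimension complexity cost $\lambda$. I do not anticipate a substantive obstacle: admissibility enters only by restricting the feasible set of directions $v$, and has no bearing on the form of the inequality. The proposition is best presented as a direct bookkeeping consequence of the MDL-acceptance definition rather than an argument requiring additional technical machinery.
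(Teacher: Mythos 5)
Your argument is correct and is essentially the same direct bookkeeping reduction the paper uses: expand $L(\mathcal C';D)<L(\mathcal C;D)$, cancel the shared $\lambda\dim(\mathcal C)$ term, and observe that the remaining inequality is exactly the stated threshold condition, with both directions of the biconditional following because every step is reversible. The well-posedness remark you append (per-experience gains are nonnegative by $\mathcal C\subseteq\mathcal C'$ and monotonicity of $\ell$) is not in the paper's proof body but matches its surrounding discussion.
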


\paragraph{Intuition.}
MDL learning implements a threshold rule: new conceptual dimensions are added
only when they explain enough unresolved structure to justify their descriptive
cost. The Proposition~\ref{prop:mdl-threshold-general} is also formalized in Appendix~\ref{app:ptc-proof}.

\subsection{Imagination and Counterfactual Experience}

\begin{proposition}[Two Mechanisms by which Imagination Affects Learning]
\label{prop:two-mechanisms-imagination1}
Internal simulation influences MDL-based learning in exactly two ways:
\begin{enumerate}
\item \emph{Directional enrichment}: simulation expands the residual span.
\item \emph{Threshold amplification}: simulation strengthens existing residual
directions until they cross the MDL acceptance threshold.
\end{enumerate}
\end{proposition}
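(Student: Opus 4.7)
The plan is to decompose the effect of augmenting experience $D$ with a simulated multiset $D_{\text{sim}}$ into changes in the two ingredients that together determine MDL acceptance: the residual span $W$ and the aggregate residual-loss reduction along each candidate direction in it. Theorem~\ref{thm:cbe} already tells us that any novelty direction of an MDL-accepted extension can be chosen inside the residual span, and Proposition~\ref{prop:mdl-threshold-general} reduces one-dimensional acceptance to a scalar threshold test against $\lambda$. So any causal route from simulation to changed acceptance must act through one of these two objects; the proposition is essentially a dichotomy statement over those routes.

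Concretely, I would fix the augmented dataset $D' = D \cup D_{\text{sim}}$ with residual span $W' = W + \mathrm{span}\{r_{\mathcal{C}}(v) : v \in D_{\text{sim}}\}$ and split into two cases. In the case $W' = W$, every residual contributed by $D_{\text{sim}}$ already lies in $W$, so candidate novelty directions are unchanged; however, for a fixed direction $w \in W$ the aggregate loss-reduction functional $\sum_{u \in D'} [\ell(\|r_{\mathcal{C}}(u)\|^2) - \ell(\|r_{\mathcal{C}+\mathbb{R}w}(u)\|^2)]$ picks up additional nonnegative contributions from $D_{\text{sim}}$ (using that $\ell$ is nondecreasing and that projection onto a one-dimensional subspace cannot increase squared residual). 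This monotone gain is precisely the \emph{threshold amplification} mechanism: a previously sub-threshold direction in $W$ may now cross $\lambda$. In the case $W' \supsetneq W$, the admissible pool of novelty directions strictly enlarges, exposing directions unavailable from $D$ alone to MDL evaluation; this is \emph{directional enrichment}.

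Exhaustiveness then follows by contraposition. Suppose a candidate extension is MDL-accepted under $D'$ but not under $D$. By Theorem~\ref{thm:cbe} its novelty directions may be taken in $W'$, and its acceptance is governed (via Proposition~\ref{prop:mdl-threshold-general} applied dimension-by-dimension, or its higher-rank analogue) by the per-direction loss-reduction test. If the novelty direction lies outside $W$, mechanism (1) has occurred; otherwise the direction lay in $W$ all along, was previously sub-threshold, and the added data pushed its aggregate gain past $\lambda$, which is mechanism (2). No third possibility remains, since $W'$ and the per-direction gain exhaust the inputs to the acceptance rule.

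The main obstacle I anticipate is making the dichotomy genuinely exclusive rather than merely suggestive, and in particular ruling out spurious ``third'' effects of simulation. The subtle points are: (i) verifying monotonicity of the per-direction gain in the dataset, which is needed so that adding $D_{\text{sim}}$ cannot perversely \emph{reduce} the gain of an existing direction and thereby cause acceptance to disappear or appear for non-enrichment reasons; (ii) handling the mixed case where $D_{\text{sim}}$ contributes some residuals inside $W$ and others outside, which should be treated by attributing the portion inside $W$ to mechanism (2) and the portion outside to mechanism (1); and (iii) pinning down exactly which admissibility assumptions from Appendix~\ref{app:ptc-proof} are invoked so that the higher-rank case reduces cleanly to the one-dimensional acceptance threshold.
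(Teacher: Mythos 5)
Your proposal is correct and follows essentially the same route as the paper's proof: case-split on whether the residual span strictly expands, invoke Theorem~\ref{thm:cbe} for the residual-supported candidate set in the enrichment case, and invoke Proposition~\ref{prop:mdl-threshold-general} together with additivity of the gain functional and monotonicity of $\ell$ (each simulated summand contributes nonnegatively since $\mathcal C\subseteq\mathcal C'$ forces $\|r_{\mathcal C'}(u)\|\le\|r_{\mathcal C}(u)\|$) in the amplification case. The only difference is that you flag and attempt an explicit exhaustiveness-by-contraposition argument and worry about the higher-rank analogue, whereas the paper confines case (2) to one-dimensional extensions and leaves the ``exactly two'' reading at the level of the $W_{\mathrm{all}}=W_{\mathrm{ext}}$ versus $W_{\mathrm{all}}\supsetneq W_{\mathrm{ext}}$ dichotomy without pressing the mixed-rank case further.
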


\paragraph{Interpretation.}
Imagination contributes to learning only insofar as it alters the geometry or
magnitude of residual error. It neither bypasses nor overrides the constraints
imposed by compression. The Proposition~\ref{prop:two-mechanisms-imagination1} is also formalized in Appendix~\ref{app:ptc-proof}.

\paragraph{Example (Two roles of imagination).}
Let $\mathcal H=\mathbb R^3$ and let the current conceptual space be
$\mathcal C=\mathrm{span}\{(1,0,0)\}$. Suppose external experience
$D_{\mathrm{ext}}=\{(1,2,0)\}$ yields a residual
\[
r_{\mathcal C}((1,2,0))=(0,2,0),
\]
so that
\[
W_{\mathrm{ext}}=\mathrm{span}\{(0,1,0)\},
\qquad
\sum_{u\in D_{\mathrm{ext}}}\|r_{\mathcal C}(u)\|^2 = 4 < \lambda.
\]
No admissible one-dimensional extension is therefore MDL-accepted on external
experience alone.

First, suppose internal simulation generates an additional experience
$(1,0,3)$, producing a residual $(0,0,3)$. The residual span expands to
\[
W_{\mathrm{all}}=\mathrm{span}\{(0,1,0),(0,0,1)\},
\]
introducing a new residual-supported candidate novelty direction
(\emph{directional enrichment}).

Alternatively, suppose internal simulation generates two additional instances
of $(1,2,0)$. The residual span remains unchanged, but the aggregate residual
energy along $(0,1,0)$ becomes
\[
\sum_{u\in D_{\mathrm{all}}}\|r_{\mathcal C}(u)\|^2 = 12 > \lambda,
\]
so the corresponding one-dimensional extension becomes MDL-accepted
(\emph{threshold amplification}).

This example illustrates that imagination affects learning either by enlarging
the residual span or by strengthening existing residual structure until it
justifies representational growth under MDL.

\subsection{Canonical Choice of an MDL-Accepted Basis Extension}

The results above characterize which forms of conceptual novelty are admissible
under an MDL criterion. We now make this characterization more explicit by
showing that, among admissible one-dimensional extensions, the MDL objective
induces a canonical choice of novelty direction. This construction is not
algorithmic in intent, but normative: it specifies how representational
refinement would proceed if governed solely by compression-based rational
selection.

Let $\mathcal C \subseteq \mathcal H$ be the current conceptual space and
$D=\{u_1,\dots,u_n\}$ a finite multiset of experience vectors. As before, define
the residuals
\[
r_i := r_{\mathcal C}(u_i) = u_i - \Pi_{\mathcal C}u_i \in \mathcal C^\perp,
\]
and let
\[
W := \mathrm{span}\{r_i : u_i \in D\}
\]
denote the residual span induced by experience.

Assume $\ell(x)=x$ for concreteness. Consider admissible one-dimensional
extensions of the form
\[
\mathcal C' = \mathcal C \oplus \mathrm{span}\{v\},
\qquad
\|v\|=1,\; v \perp \mathcal C.
\]
For any such $v$, the change in the MDL objective satisfies
\[
L(\mathcal C';D)
=
L(\mathcal C;D)
-
\sum_{u\in D} \langle r_{\mathcal C}(u), v\rangle^2
+
\lambda.
\]
Thus, among all admissible one-dimensional extensions, the MDL objective is
minimized by choosing $v$ to maximize the aggregate squared alignment with the
residuals.

Equivalently, let
\[
\Sigma := \sum_{u\in D} r_{\mathcal C}(u)\, r_{\mathcal C}(u)^\top
\]
denote the residual covariance operator on $\mathcal C^\perp$, and let $v^\star$
be a unit eigenvector corresponding to its largest eigenvalue. Defining
\[
\mathcal C^\star := \mathcal C \oplus \mathrm{span}\{v^\star\},
\]
we obtain an admissible extension that minimizes $L(\mathcal C';D)$ among all
one-dimensional extensions.

By Proposition~1, this extension is MDL-accepted if and only if
\[
\sum_{u\in D} \langle r_{\mathcal C}(u), v^\star\rangle^2 > \lambda.
\]

\paragraph{Interpretation.}
This construction sharpens the role of residual structure in conceptual growth.
The MDL principle does not merely filter proposed extensions; it induces a
preferred direction of representational refinement aligned with the dominant
structure of residual error. Conceptual novelty is therefore neither arbitrary
nor unconstrained: when growth occurs, it proceeds along directions that inherit
their geometry from systematic failures of the existing representation.

Although the construction admits a computational reading, its significance here
is normative rather than procedural. It characterizes the unique direction along
which conceptual refinement is rationally warranted under an MDL constraint,
rather than prescribing an explicit learning mechanism.

\paragraph{Example (Canonical novelty direction in a simple setting).}
Let $\mathcal H=\mathbb R^2$ and let the current conceptual space be
$\mathcal C=\mathrm{span}\{(1,0)\}$. Consider a dataset
$D=\{u_1,u_2\}$ with
\[
u_1=(3,4),\qquad u_2=(1,2).
\]
Projection onto $\mathcal C$ removes the first coordinate, yielding residuals
\[
r_{\mathcal C}(u_1)=(0,4),\qquad r_{\mathcal C}(u_2)=(0,2).
\]
The residual span is therefore
\[
W=\mathrm{span}\{(0,1)\}.
\]

For any admissible one-dimensional extension
$\mathcal C'=\mathcal C\oplus\mathrm{span}\{v\}$ with $v\perp\mathcal C$ and
$\|v\|=1$, the MDL improvement term equals
\[
\sum_{u\in D}\langle r_{\mathcal C}(u),v\rangle^2.
\]
This quantity is maximized by $v^\star=(0,1)$, which aligns with the unique
direction in $W$. The corresponding extension
$\mathcal C^\star=\mathbb R^2$ is MDL-accepted if and only if
\[
16+4 > \lambda.
\]

This example illustrates that, when residual error exhibits a dominant
low-dimensional structure, the MDL criterion singles out a unique, residual-aligned
direction of conceptual refinement.

\section{Applications and Implications}

The framework of counterfactual basis extension yields implications across
cognitive science, philosophy of mind, philosophy of science, and the analysis
of contemporary artificial representational systems. In each case, the central
contribution is to make explicit the geometric and informational constraints
under which conceptual growth is possible.

\subsection{Concept Formation and Surprise}

In cognitive science, concept formation is often modeled as the gradual
adjustment of representations in response to prediction error or surprise~\cite{clark2015surfing}.
The present framework sharpens this picture by distinguishing mere error
from \emph{structurally informative} error.

Within my model, the residual span $W$ captures precisely those directions of
experience that cannot be compressed by the current conceptual space.
Theorem~\ref{thm:cbe} shows that only such residual-supported structure can
justify the introduction of new conceptual dimensions under an MDL criterion.
As a consequence, not all surprises are equal: isolated or unstructured
prediction failures do not drive concept formation, whereas systematic,
low-dimensional residual patterns do.

This provides a geometric account of why some anomalies trigger learning while
others are ignored or treated as noise, and offers a formal bridge between
predictive-processing accounts of surprise and representational theories of
concept acquisition.

\subsection{The Limits of Imagination}

The framework also yields a principled account of the limits of imagination.
Imagination is often treated as a faculty capable of freely generating novel
ideas or possibilities. By contrast, Theorem~\ref{thm:cbe} and
Proposition~\ref{prop:two-mechanisms-imagination1} show that internally generated
experience is subject to the same structural constraints as external data.

In particular, imagination cannot introduce genuinely new conceptual directions
unless the internally generated residuals either expand the residual span or
amplify existing residual structure sufficiently to overcome the MDL acceptance
threshold. Counterfactuals that are already well explained by the current
conceptual space, or that introduce only unstructured variation, do not support
conceptual growth.

This yields a conservative but precise notion of imaginative productivity:
imagination is epistemically effective only insofar as it reveals or sharpens
previously unresolved structure.

\subsection{Theory Change and Scientific Representation}

In the philosophy of science, theory change is frequently associated with the accumulation of anomalies within an existing research programme and the eventual introduction of new theoretical principles~\cite{lakatos1978methodology}.
The present framework provides a formalization of this process that
does not appeal to sociological factors or paradigm shifts.

On my account, a scientific theory corresponds to a conceptual space that
compresses observational data. Anomalies correspond to residual structure
outside that space. Theory change occurs precisely when residual-supported
directions are sufficiently structured and informative to justify an expansion
of the representational basis under MDL.

This perspective explains both the conservatism of mature theories and the
conditions under which radical change becomes rational: new theoretical
dimensions are introduced only when existing representations systematically
fail in ways that admit low-complexity repair.

\subsection{Interpretive Implications for Large Language Models}

This application should be read as an interpretive lens rather than an empirical claim about training dynamics. 
I do not claim that large language models explicitly optimize the MDL objective
or maintain explicit conceptual subspaces. Rather, I use them as a contemporary
illustration of how large-scale representational systems may display patterns of
generalization and failure consistent with the geometric constraints identified
by my theory.

From this perspective, phenomena such as successful analogical reasoning,
prompt sensitivity, and characteristic failure modes can be interpreted as
reflecting the geometry of residual structure induced by training data.
The framework helps explain why such models exhibit impressive recombinatory
creativity while remaining constrained in their ability to generate genuinely
novel abstractions: representational extensions are effectively limited to
directions supported by structured residuals.

Importantly, this interpretation is purely conceptual rather than mechanistic.
Its value lies not in explaining how such systems are trained, but in clarifying
the limits and structure of their apparent conceptual generalization.

\subsection{Summary}

Across these domains, the central lesson is the same: conceptual growth is not a
matter of arbitrary invention, but of structurally constrained basis extension.
Whether driven by external experience, internal simulation, or large-scale data,
new concepts emerge only where existing representations fail in coherent,
compressible ways.

\section{Discussion}

\subsection{Explanatory Power and Falsifiable Signatures}

Although the framework is abstract, it yields several concrete and falsifiable
signatures concerning when conceptual growth should occur and when it should
fail. These signatures distinguish counterfactual basis extension from both
unconstrained creativity and standard learning within a fixed representation.

\paragraph{Signature 1: Residual-aligned novelty.}
When conceptual expansion occurs, newly adopted basis directions should align
with dominant directions in the residual span $W$ induced by prior experience.
Equivalently, novel concepts should correspond to low-dimensional structure in
systematic prediction error rather than to arbitrary perturbations of the
existing representation. Empirically, this predicts that new representational
axes will correlate with principal components of residual error rather than
with unexplained variance orthogonal to prior failures.

\paragraph{Signature 2: Absence of growth under perfect fit.}
If the current conceptual space yields zero (or negligibly small) residuals on
the relevant experience distribution, no admissible basis extension can be
MDL-accepted. In such regimes, increasing exposure to already well-explained
data should refine parameter values but not induce representational expansion.
This distinguishes concept formation from incremental learning and predicts
periods of representational stasis in environments that are fully predictable
under existing distinctions.

\paragraph{Signature 3: Threshold-dependent concept formation.}
The MDL acceptance condition implies a sharp threshold effect: candidate basis
directions that partially explain residual structure may remain unadopted until
the aggregate explanatory gain exceeds the fixed complexity penalty. As a
result, conceptual growth should exhibit discontinuities rather than gradual
drift, with new dimensions appearing abruptly once sufficient structured
evidence accumulates. Increasing the complexity penalty $\lambda$ should delay
or suppress such transitions without changing the geometry of residual error
itself.

\paragraph{Signature 4: Two distinct roles for imagination.}
Internally generated experience can affect learning in exactly two ways:
either by introducing residual directions not present in external data
(direction enrichment), or by amplifying existing residual structure so as to
cross the MDL acceptance threshold (threshold amplification). Importantly,
simulation that neither expands nor strengthens the residual span is predicted
to have no effect on conceptual growth, even if it is extensive or vivid. This
distinguishes productive imagination from unconstrained generative activity.

Together, these signatures render the framework empirically and conceptually
testable. They locate the source of novelty not in arbitrary generation, but in
the interaction between representational failure, simplicity constraints, and
counterfactual exploration.

\subsection{Relation to Causal Counterfactuals}

It is important to distinguish the representational counterfactuals studied
here from causal counterfactuals in the sense of Pearlian interventionist
models. In causal inference, counterfactuals are defined over alternative values
of variables within a fixed representational schema, and are used to reason
about hypothetical outcomes under interventions.

By contrast, representational counterfactuals concern hypothetical extensions
or reconfigurations of the representational space itself. They ask not
``What would have happened if $X$ had been different?'' but rather ``What would
be representable if the conceptual basis were extended in this direction?''

These two notions are complementary rather than competing. Causal
counterfactuals operate within a conceptual space; representational
counterfactuals govern how that space may itself change. The present framework
thus occupies a meta-level relative to standard causal reasoning, specifying
the structural preconditions under which new variables, distinctions, or
dimensions can meaningfully enter a model.

\section{Limitations}

The framework also has clear limitations.

\begin{itemize}[leftmargin=*]
  \item The MDL objective depends on a chosen encoding of experience. Different
        encodings may induce different residual geometries, and the theory does
        not prescribe a unique or privileged representation.
  \item The analysis assumes linear, finite-dimensional conceptual spaces and
        orthogonal projections. Extensions to nonlinear, hierarchical, or
        infinite-dimensional representations remain open.
  \item The model characterizes the structural conditions for conceptual growth,
        but does not address phenomenological aspects of understanding, such as
        subjective meaning, affect, or conscious experience.
\end{itemize}

These limitations reflect a deliberate focus on representational structure
rather than psychological completeness.

\section{Conclusion}

This paper modeled conceptual growth as admissible basis extension selected by
reduction in description length, and distinguished representational
counterfactuals from causal counterfactuals. Within this framework, learning and
imagination are unified as constrained processes operating over the geometry of
residual error.

The resulting picture is conservative but precise. Novelty does not arise from
arbitrary invention, but from structurally informative failure of existing
representations. Imagination functions not as an unconstrained generator of new
ideas, but as a mechanism for exploring and amplifying residual structure.

By grounding epistemic growth in representational geometry and compression
pressure, the framework provides a formal account of how new concepts can
emerge without abandoning rational constraint, and clarifies the limits within
which creativity, learning, and counterfactual reasoning can operate.

\section*{Acknowledgments}
The author also thanks ChatGPT (OpenAI) for assistance in improving the clarity of exposition during the writing process. All conceptual development, theoretical constructs, formal definitions, proofs, and interpretations are solely the author’s own, and all mathematical and philosophical content was independently verified by the author. NotebookLM was used to generate Figure 1 for illustrative purposes. Lastly, the author thanks Usawadee Chaiprom for her kindness and being supportive. 

\appendix

\section{Table of Symbols}
For ease of reference, we summarize the main symbols used throughout the paper.

\begin{center}
\begin{tabular}{ll}
\hline
\textbf{Symbol} & \textbf{Meaning} \\
\hline
$\mathcal H$ & Finite-dimensional Euclidean (inner-product) space of representations \\

$\mathcal C \subseteq \mathcal H$ & Current conceptual subspace of the agent \\

$\mathcal C'$ & An extended conceptual subspace (after basis extension) \\

$\mathcal C^\perp$ & Orthogonal complement of $\mathcal C$ in $\mathcal H$ \\

$D \subseteq \mathcal H$ & Finite multiset of experience vectors \\

$u \in D$ & An individual experience vector \\

$\Pi_{\mathcal S}$ & Orthogonal projection onto subspace $\mathcal S$ \\

$r_{\mathcal C}(u)$ & Residual of $u$ relative to $\mathcal C$, defined as $u-\Pi_{\mathcal C}u$ \\

$W$ & Residual span $\mathrm{span}\{r_{\mathcal C}(u) : u \in D\} \subseteq \mathcal C^\perp$ \\

$U \subseteq \mathcal C^\perp$ & Novelty subspace added to $\mathcal C$ \\

$\mathcal T(\mathcal C)$ & Class of admissible extensions of $\mathcal C$ \\

$r$ & Upper bound on admissible novelty rank \\

$\dim(\mathcal S)$ & Dimension of subspace $\mathcal S$ \\

$\operatorname{rank}(\cdot)$ & Rank of a linear operator \\

$L(\mathcal S;D)$ & MDL objective evaluated for subspace $\mathcal S$ on data $D$ \\

$\ell(\cdot)$ & Nondecreasing loss function applied to squared residual norms \\

$\lambda$ & Positive complexity penalty parameter in the MDL objective \\

$\widehat{\mathcal C}$ & Residual-supported extension $\mathcal C \oplus (U \cap W)$ \\

$W^\perp$ & Orthogonal complement of the residual span $W$ \\

\hline
\end{tabular}
\end{center}

\section{Proofs and Technical Conditions}
\label{app:ptc-proof}

\subsection{Proof of Theorem~\ref{thm:cbe}}

\begin{definition}[MDL objective and residual span]
Let $\mathcal H$ be a finite-dimensional Euclidean space and
$\mathcal C\subseteq\mathcal H$ a conceptual subspace. Let $D\subseteq\mathcal H$
be a finite multiset.

Define
\[
L(\mathcal S;D)
\;=\;
\sum_{u\in D}\ell\!\bigl(\|u-\Pi_{\mathcal S}u\|^2\bigr)
\;+\;
\lambda\,\dim(\mathcal S),
\]
where $\lambda>0$ and $\ell:\mathbb R_{\ge 0}\to\mathbb R_{\ge 0}$ is nondecreasing.

Define the residual map $r_{\mathcal C}(u):=u-\Pi_{\mathcal C}u$ and the residual span
\[
W:=\mathrm{span}\{\,r_{\mathcal C}(u):u\in D\,\}\subseteq \mathcal C^\perp.
\]
\end{definition}

\begin{definition}[Admissible extension class]
\label{def:admissible_extensions}
Let $\mathcal T(\mathcal C)$ be a class of admissible extensions of $\mathcal C$
such that every $\mathcal C'\in\mathcal T(\mathcal C)$ admits an orthogonal decomposition
\[
\mathcal C'=\mathcal C\oplus U,
\qquad U\subseteq \mathcal C^\perp .
\]
We require that:
\begin{enumerate}
\item \emph{(Bounded novelty rank)} There exists $r\in\mathbb N$ such that for all
$\mathcal C'\in\mathcal T(\mathcal C)$,
\[
\operatorname{rank}(\Pi_{\mathcal C'}-\Pi_{\mathcal C})\le r .
\]

\item \emph{($W$-reducibility and closure)} For every admissible $U$ arising from
$\mathcal C'=\mathcal C\oplus U\in\mathcal T(\mathcal C)$,
\[
U = (U\cap W)\oplus (U\cap W^\perp),
\]
and the restricted extension
\[
\widehat{\mathcal C}:=\mathcal C\oplus (U\cap W)
\]
also lies in $\mathcal T(\mathcal C)$.
\end{enumerate}
\end{definition}

\begin{proposition}[Low-rank novelty]
\label{prop:low_rank_novelty}
Under Definition~\ref{def:admissible_extensions}, for any admissible extension
$\mathcal C'=\mathcal C\oplus U\in\mathcal T(\mathcal C)$,
\[
\dim(\mathcal C')-\dim(\mathcal C)
\;=\;
\dim(U)
\;=\;
\operatorname{rank}(\Pi_{\mathcal C'}-\Pi_{\mathcal C})
\;\le\; r.
\]
\end{proposition}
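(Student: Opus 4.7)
The plan is to treat this as a direct unpacking of Definition~\ref{def:admissible_extensions} together with standard facts about orthogonal projections, verifying the three equalities in the stated order since each feeds the next.

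First, for $\dim(\mathcal C') - \dim(\mathcal C) = \dim(U)$, I would invoke the fact that $U \subseteq \mathcal C^\perp$ forces $\mathcal C \cap U \subseteq \mathcal C \cap \mathcal C^\perp = \{0\}$, so the sum $\mathcal C + U$ is internal direct and dimensions add. Next, for $\dim(U) = \operatorname{rank}(\Pi_{\mathcal C'} - \Pi_{\mathcal C})$, I would use that when $\mathcal C' = \mathcal C \oplus U$ with $U \perp \mathcal C$, the orthogonal projection onto $\mathcal C'$ decomposes additively as $\Pi_{\mathcal C'} = \Pi_{\mathcal C} + \Pi_U$; subtracting then yields $\Pi_{\mathcal C'} - \Pi_{\mathcal C} = \Pi_U$, whose rank equals $\dim(U)$ since an orthogonal projection onto a $k$-dimensional subspace has rank exactly $k$.

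The final bound $\operatorname{rank}(\Pi_{\mathcal C'} - \Pi_{\mathcal C}) \le r$ is then immediate from the bounded novelty rank condition in part~(1) of Definition~\ref{def:admissible_extensions}. The only step that requires a moment's care is the additive decomposition $\Pi_{\mathcal C'} = \Pi_{\mathcal C} + \Pi_U$, which I would justify by writing any $x \in \mathcal H$ in the orthogonal decomposition $\mathcal H = \mathcal C \oplus U \oplus (\mathcal C')^\perp$ and checking that both sides act identically on each summand. I do not anticipate a real obstacle here; the proposition is essentially a bookkeeping lemma that licenses interpreting $\dim(U)$ as the number of genuinely novel conceptual directions adopted by an admissible extension, thereby supplying the quantitative meaning of ``low-rank novelty'' used throughout Theorem~\ref{thm:cbe}.
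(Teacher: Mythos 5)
Your proof is correct and follows essentially the same route as the paper: decompose $\Pi_{\mathcal C'} = \Pi_{\mathcal C} + \Pi_U$ using orthogonality of $\mathcal C$ and $U$, identify $\Pi_{\mathcal C'} - \Pi_{\mathcal C} = \Pi_U$, note that the rank of a projection equals the dimension of its range, and invoke the bounded-rank condition from Definition~\ref{def:admissible_extensions}(1). The only difference is that you spell out the additivity of the projection a bit more carefully than the paper does, which is fine.
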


\begin{proof}
Because the decomposition $\mathcal C'=\mathcal C\oplus U$ is orthogonal, the
associated projection satisfies
\[
\Pi_{\mathcal C'}=\Pi_{\mathcal C}+\Pi_U,
\]
and hence
\[
\Pi_{\mathcal C'}-\Pi_{\mathcal C}=\Pi_U.
\]
Since $\Pi_U$ is the orthogonal projection onto $U$, its rank equals $\dim(U)$.
The claimed bound then follows directly from
Definition~\ref{def:admissible_extensions}(1).
\end{proof}
From Proposition~\ref{prop:low_rank_novelty}, adding novelty directions from $\mathcal C$ to $\mathcal C'$ changes the model
only through the new projection component $\Pi_U$; the rank of this change
counts exactly how many independent degrees of freedom are added, so a bound on
the projection’s rank directly bounds the dimensionality of admissible novelty.

\begin{lemma}[Fit term depends only on $U\cap W$]
\label{lem:fit_depends_on_W}
Let $\mathcal C'=\mathcal C\oplus U$ with $U\subseteq \mathcal C^\perp$ and define
$\widehat{\mathcal C}:=\mathcal C\oplus (U\cap W)$.
Then for every $u\in D$,
\[
\Pi_{\mathcal C'}u=\Pi_{\widehat{\mathcal C}}u,
\]
and hence the data-fit terms in $L(\mathcal C';D)$ and $L(\widehat{\mathcal C};D)$ coincide.
\end{lemma}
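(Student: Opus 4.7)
The plan is to decompose both projections using the orthogonality $U\subseteq\mathcal C^\perp$ and then exploit the $W$-reducibility clause of Definition~\ref{def:admissible_extensions} to show that the portion of $U$ sitting inside $W^\perp$ contributes nothing when applied to an experience vector. Because the fit term depends on $u$ only through $\Pi_{\mathcal C'}u$, equality of projections on each $u\in D$ immediately yields equality of the corresponding summands in $L$.

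First, I would write $\Pi_{\mathcal C'}=\Pi_{\mathcal C}+\Pi_U$ (exactly as in the proof of Proposition~\ref{prop:low_rank_novelty}), and analogously $\Pi_{\widehat{\mathcal C}}=\Pi_{\mathcal C}+\Pi_{U\cap W}$, using that $U\cap W\subseteq U\subseteq\mathcal C^\perp$ so that the sum defining $\widehat{\mathcal C}$ is also orthogonal. Next, for $u\in D$ I would decompose $u=\Pi_{\mathcal C}u+r_{\mathcal C}(u)$. Since $\Pi_{\mathcal C}u\in\mathcal C$ and both $U$ and $U\cap W$ lie in $\mathcal C^\perp$, the projections $\Pi_U$ and $\Pi_{U\cap W}$ annihilate $\Pi_{\mathcal C}u$, so the problem reduces to comparing $\Pi_U r_{\mathcal C}(u)$ with $\Pi_{U\cap W}r_{\mathcal C}(u)$.

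This is where Definition~\ref{def:admissible_extensions}(2) does the real work. The hypothesis $U=(U\cap W)\oplus (U\cap W^\perp)$, combined with the observation that $U\cap W\subseteq W$ and $U\cap W^\perp\subseteq W^\perp$ are mutually orthogonal, upgrades the set-theoretic decomposition to an orthogonal one, so $\Pi_U=\Pi_{U\cap W}+\Pi_{U\cap W^\perp}$. By the very definition of the residual span, $r_{\mathcal C}(u)\in W$, so it is orthogonal to every vector in $W^\perp$ and in particular $\Pi_{U\cap W^\perp}r_{\mathcal C}(u)=0$. Combining these facts gives $\Pi_{\mathcal C'}u=\Pi_{\mathcal C}u+\Pi_{U\cap W}r_{\mathcal C}(u)=\Pi_{\widehat{\mathcal C}}u$, and therefore $\|u-\Pi_{\mathcal C'}u\|^2=\|u-\Pi_{\widehat{\mathcal C}}u\|^2$ for every $u\in D$, so the fit terms agree summand by summand regardless of $\ell$.

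The argument is short, and I do not anticipate a real technical obstacle; the only point requiring care is verifying that the algebraic decomposition of $U$ supplied by Definition~\ref{def:admissible_extensions}(2) really is an \emph{orthogonal} direct sum (so that $\Pi_U$ splits additively), which follows automatically from $U\cap W\perp U\cap W^\perp$. Conceptually, the main purpose of the lemma is to reduce analysis of an arbitrary admissible novelty subspace $U$ to its residual-supported part $U\cap W$, which will in turn drive the residual-supported acceptance and no-orthogonal-gain clauses of Theorem~\ref{thm:cbe}.
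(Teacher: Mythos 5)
Your proof is correct and follows essentially the same route as the paper's: decompose $u$ into its $\mathcal C$-component plus the residual $r_{\mathcal C}(u)\in W$, use $U\subseteq\mathcal C^\perp$ to reduce to the action on the residual, and then invoke the $W$-reducibility clause of Definition~\ref{def:admissible_extensions}(2) together with $r_{\mathcal C}(u)\perp W^\perp$ to conclude $\Pi_U r_{\mathcal C}(u)=\Pi_{U\cap W}r_{\mathcal C}(u)$. The only difference is presentational: you make explicit that $U=(U\cap W)\oplus(U\cap W^\perp)$ is an orthogonal direct sum (so $\Pi_U=\Pi_{U\cap W}+\Pi_{U\cap W^\perp}$), a step the paper leaves implicit.
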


\begin{proof}
Fix $u\in D$ and write $u=\Pi_{\mathcal C}u+r_{\mathcal C}(u)$ with $r_{\mathcal C}(u)\in W$.
Because $U\subseteq\mathcal C^\perp$,
\[
\Pi_{\mathcal C'}u=\Pi_{\mathcal C}u+\Pi_U r_{\mathcal C}(u).
\]
Using $U=(U\cap W)\oplus(U\cap W^\perp)$ and $r_{\mathcal C}(u)\in W$ gives
$\Pi_U r_{\mathcal C}(u)=\Pi_{U\cap W}r_{\mathcal C}(u)$, so
$\Pi_{\mathcal C'}u=\Pi_{\widehat{\mathcal C}}u$.
\end{proof}

From Lemma~\ref{lem:fit_depends_on_W}, the data can only “see” novelty directions that overlap with the residuals left
unexplained by the current concept $\mathcal C$; any component of $U$ orthogonal
to the residual span $W$ never affects projections of the observed data and
therefore cannot change the fit term.

\begin{proposition}[Residual-supported dominance]
\label{prop:residual_supported_dominance}
Under Definition~\ref{def:admissible_extensions}, let $\mathcal C'=\mathcal C\oplus U\in\mathcal T(\mathcal C)$
and define $\widehat{\mathcal C}:=\mathcal C\oplus(U\cap W)$.
Then $\widehat{\mathcal C}\in\mathcal T(\mathcal C)$ and
\[
L(\widehat{\mathcal C};D)\le L(\mathcal C';D).
\]
In particular, if $\mathcal C'$ is MDL-accepted (i.e.\ $L(\mathcal C';D)<L(\mathcal C;D)$),
then $\widehat{\mathcal C}$ is also MDL-accepted and has novelty contained in $W$.
\end{proposition}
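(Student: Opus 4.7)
The plan is to assemble \emph{Residual-supported dominance} from three pieces already in place: admissibility closure, the fit-invariance lemma, and a trivial dimension inequality. First I would invoke Definition~\ref{def:admissible_extensions}(2), whose $W$-reducibility-and-closure clause explicitly guarantees $\widehat{\mathcal C}=\mathcal C\oplus(U\cap W)\in\mathcal T(\mathcal C)$. No separate verification of admissibility is needed here; this is exactly what the closure condition is designed to supply.

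Second, I would reduce the comparison of $L(\widehat{\mathcal C};D)$ and $L(\mathcal C';D)$ to a comparison of complexity penalties. Lemma~\ref{lem:fit_depends_on_W} already shows $\Pi_{\mathcal C'}u=\Pi_{\widehat{\mathcal C}}u$ for every $u\in D$, so $\|u-\Pi_{\mathcal C'}u\|^2=\|u-\Pi_{\widehat{\mathcal C}}u\|^2$, and applying the same nondecreasing $\ell$ to both sides shows that the fit contributions to $L$ are termwise identical. The inequality therefore collapses to a comparison of $\lambda\dim(\mathcal C')$ against $\lambda\dim(\widehat{\mathcal C})$.

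Third, I would compare dimensions directly. Orthogonality of the decomposition $U=(U\cap W)\oplus(U\cap W^\perp)$ gives $\dim(U)=\dim(U\cap W)+\dim(U\cap W^\perp)\ge\dim(U\cap W)$, so $\dim(\widehat{\mathcal C})=\dim(\mathcal C)+\dim(U\cap W)\le\dim(\mathcal C)+\dim(U)=\dim(\mathcal C')$. Multiplying by $\lambda>0$ and adding to the equal fit terms yields $L(\widehat{\mathcal C};D)\le L(\mathcal C';D)$. For the acceptance corollary, I would chain the inequalities: if $L(\mathcal C';D)<L(\mathcal C;D)$, then $L(\widehat{\mathcal C};D)\le L(\mathcal C';D)<L(\mathcal C;D)$, so $\widehat{\mathcal C}$ is MDL-accepted, and by construction its novelty subspace over $\mathcal C$ is $U\cap W\subseteq W$.

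The main obstacle is not analytic but structural: ensuring that Definition~\ref{def:admissible_extensions}(2) is correctly invoked. The whole argument rests on $\widehat{\mathcal C}$ being admissible, which is not automatic from admissibility of $\mathcal C'$ alone; it is precisely what the $W$-reducibility-and-closure clause postulates. I would therefore flag in the writeup that this is the sole nontrivial use of admissibility, and that the remainder, fit equality plus dimension monotonicity, is entirely routine once closure is granted.
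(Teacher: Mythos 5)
Your proposal is correct and follows essentially the same route as the paper's proof: invoke admissibility closure from Definition~\ref{def:admissible_extensions}(2) to get $\widehat{\mathcal C}\in\mathcal T(\mathcal C)$, apply Lemma~\ref{lem:fit_depends_on_W} to equate the fit terms, and observe $\dim(\widehat{\mathcal C})\le\dim(\mathcal C')$ to conclude. The only difference is cosmetic: you justify the dimension step via the orthogonal decomposition of $U$, whereas the paper cites the inclusion $U\cap W\subseteq U$ directly, and you spell out the final chaining of inequalities that the paper leaves implicit.
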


\begin{proof}
By Definition~\ref{def:admissible_extensions}(2), $\widehat{\mathcal C}\in\mathcal T(\mathcal C)$.
By Lemma~\ref{lem:fit_depends_on_W}, the fit terms coincide.
Also $\dim(\widehat{\mathcal C})\le \dim(\mathcal C')$ since $U\cap W\subseteq U$,
so the complexity term weakly decreases. Hence
$L(\widehat{\mathcal C};D)\le L(\mathcal C';D)$.
\end{proof}

From Proposition~\ref{prop:residual_supported_dominance}, any admissible novelty that improves MDL can be stripped of directions that do not interact with the observed residuals without affecting data fit, leaving a
simpler extension whose novel dimensions are supported entirely on what the
data actually reveal.

\begin{proposition}[No MDL gain from purely $W^\perp$ novelty]
\label{prop:no_gain_W_perp}
Under Definition~\ref{def:admissible_extensions}, if
$\mathcal C'=\mathcal C\oplus U\in\mathcal T(\mathcal C)$ with
$U\subseteq W^\perp$, then
\[
L(\mathcal C';D)=L(\mathcal C;D)+\lambda\,\dim(U)\ge L(\mathcal C;D),
\]
with strict inequality whenever $U\neq\{0\}$.
In particular, no nontrivial extension whose novelty lies entirely in $W^\perp$
can be MDL-accepted.
\end{proposition}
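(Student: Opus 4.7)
The plan is to derive this proposition as an almost immediate consequence of Lemma~\ref{lem:fit_depends_on_W}, using the fact that the hypothesis $U\subseteq W^\perp$ forces the intersection $U\cap W$ to be trivial, which in turn pins the data-fit term to its pre-extension value and leaves only the complexity penalty as the net change.

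First, I would note that under $U\subseteq W^\perp$ we have $U\cap W=\{0\}$, since any vector lying in both $W$ and $W^\perp$ must be zero. Consequently the restricted extension from Lemma~\ref{lem:fit_depends_on_W} reduces to $\widehat{\mathcal C}=\mathcal C\oplus\{0\}=\mathcal C$. Applying Lemma~\ref{lem:fit_depends_on_W} vector-by-vector then yields $\Pi_{\mathcal C'}u=\Pi_{\mathcal C}u$ for every $u\in D$, so that
\[
\sum_{u\in D}\ell\bigl(\|u-\Pi_{\mathcal C'}u\|^2\bigr)
=
\sum_{u\in D}\ell\bigl(\|u-\Pi_{\mathcal C}u\|^2\bigr).
\]

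Next I would account for the dimension term. Because $\mathcal C'=\mathcal C\oplus U$ is an orthogonal direct sum, $\dim(\mathcal C')=\dim(\mathcal C)+\dim(U)$, so the complexity penalty contributes exactly $\lambda\dim(U)$ more than it did for $\mathcal C$. Substituting both facts into the MDL objective gives the claimed identity $L(\mathcal C';D)=L(\mathcal C;D)+\lambda\dim(U)$. Since $\lambda>0$ and $\dim(U)\ge 0$, inequality $L(\mathcal C';D)\ge L(\mathcal C;D)$ is immediate, with strict inequality precisely when $\dim(U)\ge 1$, that is when $U\neq\{0\}$. MDL-acceptance would require $L(\mathcal C';D)<L(\mathcal C;D)$, which is ruled out by strict inequality in the nontrivial case.

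There is essentially no hard step: the one place to be careful is the clean invocation of Lemma~\ref{lem:fit_depends_on_W}, which itself relied on the $W$-reducibility clause of Definition~\ref{def:admissible_extensions}(2). Here that clause is trivially satisfied because $U=(U\cap W)\oplus(U\cap W^\perp)$ collapses to $U=U\cap W^\perp$ under the hypothesis, so no additional admissibility argument is required. The only modeling subtlety worth flagging in the write-up is that the result holds regardless of the specific nondecreasing loss $\ell$, since $\ell$ is applied to identical squared-residual values on both sides.
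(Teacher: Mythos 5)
Your proof is correct and matches the paper's argument at its core: both hinge on the observation that when $U\subseteq W^\perp$ the novel directions cannot interact with any residual, so the data-fit term is unchanged while the complexity term increases by $\lambda\dim(U)$. The only difference is procedural: you route through Lemma~\ref{lem:fit_depends_on_W} by noting $U\cap W=\{0\}$ collapses $\widehat{\mathcal C}$ to $\mathcal C$, whereas the paper re-derives the projection identity $\Pi_{\mathcal C'}u=\Pi_{\mathcal C}u+\Pi_U r_{\mathcal C}(u)=\Pi_{\mathcal C}u$ directly; your remark that $W$-reducibility holds trivially under $U\subseteq W^\perp$ is accurate and a nice touch.
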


\begin{proof}
If $U\subseteq W^\perp$, then for every $u\in D$ we have $r_{\mathcal C}(u)\in W$,
and hence $\Pi_U r_{\mathcal C}(u)=0$.
Since $\mathcal C'=\mathcal C\oplus U$ with $U\subseteq\mathcal C^\perp$, it follows that
\[
\Pi_{\mathcal C'}u
=\Pi_{\mathcal C}u+\Pi_U r_{\mathcal C}(u)
=\Pi_{\mathcal C}u,
\]
so the data-fit term in the MDL objective is unchanged.
The complexity penalty increases by $\lambda\,\dim(U)$, which is strictly positive
whenever $U\neq\{0\}$.
\end{proof}
From Proposition~\ref{prop:no_gain_W_perp}, novelty directions orthogonal to the residual span $W$ are invisible to the data,
so they cannot improve the fit term and only increase model complexity, making
any such extension strictly worse under the MDL objective.

\begin{theorem}[Counterfactual Basis Extension under MDL]
\label{thm:cbe_mdl}
Under Definition~\ref{def:admissible_extensions}, for any MDL-accepted update
$\mathcal C'\in\mathcal T(\mathcal C)$:
\begin{enumerate}
\item the novelty dimension satisfies $\dim(\mathcal C')-\dim(\mathcal C)\le r$
(by Proposition~\ref{prop:low_rank_novelty});
\item there exists an MDL-accepted admissible update $\widehat{\mathcal C}=\mathcal C\oplus U'$
with $U'\subseteq W$ and $L(\widehat{\mathcal C};D)\le L(\mathcal C';D)$
(by Proposition~\ref{prop:residual_supported_dominance});
\item no nontrivial extension with novelty contained in $W^\perp$ can be MDL-accepted
(by Proposition~\ref{prop:no_gain_W_perp}).
\end{enumerate}
\end{theorem}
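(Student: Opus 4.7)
The statement is essentially a consolidation: each of its three clauses has already been proved in the immediately preceding propositions, so the plan is to assemble the theorem by invoking these results in sequence, while being careful that the hypotheses of each proposition are actually in force for a generic MDL-accepted $\mathcal C'\in\mathcal T(\mathcal C)$. First I would fix an arbitrary MDL-accepted admissible extension $\mathcal C'=\mathcal C\oplus U$, so that by definition $L(\mathcal C';D)<L(\mathcal C;D)$ and $U\subseteq\mathcal C^\perp$ satisfies the $W$-reducibility condition of Definition~\ref{def:admissible_extensions}(2).

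For clause~(1), I would observe that $\mathcal C'$ lies in $\mathcal T(\mathcal C)$ and apply Proposition~\ref{prop:low_rank_novelty} verbatim; this immediately yields $\dim(\mathcal C')-\dim(\mathcal C)=\operatorname{rank}(\Pi_{\mathcal C'}-\Pi_{\mathcal C})\le r$. For clause~(2), I would set $\widehat{\mathcal C}:=\mathcal C\oplus(U\cap W)$, invoke Proposition~\ref{prop:residual_supported_dominance} to conclude $\widehat{\mathcal C}\in\mathcal T(\mathcal C)$ and $L(\widehat{\mathcal C};D)\le L(\mathcal C';D)<L(\mathcal C;D)$, so that $\widehat{\mathcal C}$ is itself MDL-accepted and has novelty subspace $U':=U\cap W\subseteq W$. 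For clause~(3), I would argue by contrapositive: if $\mathcal C''=\mathcal C\oplus U''$ with $\{0\}\neq U''\subseteq W^\perp$ were MDL-accepted, then Proposition~\ref{prop:no_gain_W_perp} would give $L(\mathcal C'';D)>L(\mathcal C;D)$, contradicting acceptance.

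The only nontrivial bookkeeping is verifying that the closure hypothesis in Definition~\ref{def:admissible_extensions}(2) has genuinely been used whenever we replace $\mathcal C'$ by $\widehat{\mathcal C}$; without it, $\widehat{\mathcal C}$ might fail to be admissible even though the inequality on $L$ still holds. I would explicitly cite this closure step when stating clause~(2), so that the reduced extension remains inside $\mathcal T(\mathcal C)$ rather than drifting outside the admissible class.

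\textbf{Main obstacle.} There is no real technical obstacle at this stage, since the substantive work was done in Propositions~\ref{prop:low_rank_novelty}--\ref{prop:no_gain_W_perp}. The only point requiring care is rhetorical rather than mathematical: making clear that clause~(2) asserts the existence of a residual-supported MDL-accepted \emph{replacement} $\widehat{\mathcal C}$ rather than requiring the original $\mathcal C'$ itself to have novelty in $W$, and that clause~(3) rules out only extensions whose novelty is \emph{entirely} in $W^\perp$, not mixed extensions with a nonzero component in $W$.
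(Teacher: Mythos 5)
Your proof is correct and matches the paper's approach exactly: the theorem in the appendix is stated precisely as a consolidation of Propositions~\ref{prop:low_rank_novelty}, \ref{prop:residual_supported_dominance}, and \ref{prop:no_gain_W_perp} (each clause is annotated with the corresponding citation), and your assembly—including the remark that the closure hypothesis in Definition~\ref{def:admissible_extensions}(2) is what keeps $\widehat{\mathcal C}$ inside $\mathcal T(\mathcal C)$—is the same argument.
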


\paragraph{Interpretation: The Geometry of Epistemic Growth.}
Theorem~\ref{thm:cbe_mdl} provides the technical foundation for the main-text
Theorem~\ref{thm:cbe}, formalizing learning as a geometric process driven by
structured representational failure rather than by passive accumulation of
information. Conceptual growth occurs only through the resolution of residual
error relative to the agent’s current conceptual subspace.

The central object of the analysis is the residual span $W$, generated by
vectors of the form $u-\Pi_{\mathcal C}u$. Geometrically, $W$ captures precisely
those directions in experience that cannot be represented within the current
conceptual space; epistemically, it characterizes the structured content of the
agent’s ignorance. The theorem shows that admissible conceptual extensions
cannot introduce genuinely novel directions arbitrarily: any MDL-accepted
extension must draw its new basis directions from $W$, while extensions
orthogonal to $W$ strictly increase description length without improving
explanatory power. In particular, when an agent’s predictions incur no residual
error, conceptual expansion is impossible.

This result sharply constrains the space of conceptual discovery. New concepts
are not introduced ex nihilo, but inherit their structure from systematic
mismatches between prediction and experience. The MDL criterion enforces
parsimony by allowing representational growth only when residual patterns are
sufficiently coherent and informative to justify the cost of increased
complexity, thereby formally distinguishing structured anomalies from
unstructured noise.

Finally, the result applies equally when experience vectors arise from internal
simulation rather than external observation. Counterfactual reasoning or
imagination contributes to learning only insofar as it generates residuals that
enlarge or refine the residual span $W$. Simulated experiences that are already
explained by the current model, or that introduce purely unstructured error, do
not support conceptual growth.

Taken together, Theorem~\ref{thm:cbe} characterizes epistemic development as a
counterfactual, error-driven process of basis extension: new concepts emerge
precisely at the boundaries of representational failure, and only to the extent
that such failures exhibit learnable geometric structure.

\subsection{Proof of Proposition~\ref{prop:mdl-threshold-general}}

\begin{proposition}[MDL Acceptance Threshold (general loss)]
Let $\mathcal{C}'=\mathcal{C}\oplus \mathrm{span}\{b_{\mathrm{new}}\}$ be an admissible
one-dimensional extension, and set $\Delta:=\lambda$. Then $\mathcal{C}'$ is
MDL-accepted (i.e., $L(\mathcal{C}';D)<L(\mathcal{C};D)$) if and only if
\[
\sum_{u\in D}\Bigl[\ell\!\bigl(\|r_{\mathcal{C}}(u)\|^2\bigr)
-
\ell\!\bigl(\|r_{\mathcal{C}'}(u)\|^2\bigr)\Bigr]
\;>\;\Delta.
\]
\end{proposition}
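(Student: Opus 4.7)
The plan is to prove this by direct algebraic manipulation of the MDL objective, since the proposition is essentially an unpacking of the definition of MDL-acceptance in the one-dimensional case. The core observation is that the two sides of the stated inequality differ only by a sign and a rearrangement of the complexity term.

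First I would expand both $L(\mathcal{C};D)$ and $L(\mathcal{C}';D)$ using the definition of the MDL objective, writing
\[
L(\mathcal{C}';D)-L(\mathcal{C};D)
\;=\;
\sum_{u\in D}\Bigl[\ell\!\bigl(\|r_{\mathcal{C}'}(u)\|^2\bigr)-\ell\!\bigl(\|r_{\mathcal{C}}(u)\|^2\bigr)\Bigr]
\;+\;\lambda\bigl(\dim(\mathcal{C}')-\dim(\mathcal{C})\bigr).
\]
Next I would invoke the one-dimensionality hypothesis: because $\mathcal{C}'=\mathcal{C}\oplus\mathrm{span}\{b_{\mathrm{new}}\}$ is a genuine (nontrivial) admissible extension, $b_{\mathrm{new}}\notin\mathcal{C}$ and hence $\dim(\mathcal{C}')-\dim(\mathcal{C})=1$. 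Substituting this, the complexity differential collapses to exactly $\lambda=\Delta$.

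Then I would rearrange. The MDL-acceptance condition $L(\mathcal{C}';D)<L(\mathcal{C};D)$ becomes
\[
\sum_{u\in D}\Bigl[\ell\!\bigl(\|r_{\mathcal{C}'}(u)\|^2\bigr)-\ell\!\bigl(\|r_{\mathcal{C}}(u)\|^2\bigr)\Bigr]+\Delta<0,
\]
which, after negation and transposition, is exactly the stated threshold inequality. Since every step is an equivalence (the dimension increment is exact, and the inequality manipulation is reversible), both directions of the ``if and only if'' follow simultaneously.

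There is no substantive obstacle here: the argument is a two-line computation once the definitions are unfolded. The only point requiring mild care is to confirm that the admissibility of $\mathcal{C}'$ as a one-dimensional extension rules out the degenerate case $b_{\mathrm{new}}\in\mathcal{C}$ (so that the dimension really does go up by one); this is guaranteed by the direct-sum notation $\mathcal{C}\oplus\mathrm{span}\{b_{\mathrm{new}}\}$ in the statement. Notably, the argument makes no use of convexity, monotonicity, or differentiability of $\ell$ beyond what is already assumed (nondecreasing), and in particular it does not require the residual-span geometry developed for Theorem~\ref{thm:cbe}; the threshold characterization is a purely arithmetic consequence of the MDL objective's additive decomposition into fit and complexity.
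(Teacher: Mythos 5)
Your proof is correct and follows essentially the same route as the paper's: expand the MDL objective, use that a genuine one-dimensional extension increases $\dim$ by exactly $1$ so the complexity term changes by $\lambda=\Delta$, and rearrange the resulting identity into the stated threshold inequality. The only difference is cosmetic—you compute $L(\mathcal{C}';D)-L(\mathcal{C};D)$ while the paper computes $L(\mathcal{C};D)-L(\mathcal{C}';D)$—and your observation that the argument is entirely independent of the residual-span geometry is accurate.
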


\begin{proof}
Because $\dim(\mathcal{C}')=\dim(\mathcal{C})+1$, the complexity term increases by
$\lambda=\Delta$. Hence
\begin{align*}
L(\mathcal{C};D)-L(\mathcal{C}';D)
&=
\sum_{u\in D}\ell\!\bigl(\|r_{\mathcal{C}}(u)\|^2\bigr)
-
\sum_{u\in D}\ell\!\bigl(\|r_{\mathcal{C}'}(u)\|^2\bigr)
-\Delta \\
&=
\sum_{u\in D}\Bigl[\ell\!\bigl(\|r_{\mathcal{C}}(u)\|^2\bigr)
-
\ell\!\bigl(\|r_{\mathcal{C}'}(u)\|^2\bigr)\Bigr]
-\Delta.
\end{align*}
Therefore $L(\mathcal{C}';D)<L(\mathcal{C};D)$ holds if and only if the bracketed sum
exceeds $\Delta$.
\end{proof}

\paragraph{Intuition.}
The proposition makes explicit that MDL acceptance is governed by a simple
trade-off between explanatory gain and representational cost. Introducing a
new basis direction incurs a fixed complexity penalty $\Delta=\lambda$,
independent of the data. An extension is therefore accepted if and only if the
aggregate reduction in residual error—measured through the loss function
$\ell$—exceeds this fixed cost. In this sense, MDL learning implements a
threshold rule: new conceptual dimensions are added only when they explain
enough previously unresolved structure in experience to justify their
descriptive overhead. This formalizes the idea that conceptual growth is
selective rather than automatic, favoring structured, systematic error
reduction over marginal or noisy improvements.

\subsection{Proof of Proposition~\ref{prop:two-mechanisms-imagination1}}

\begin{proposition}[Two Mechanisms by which Imagination Affects MDL-Based Learning]
\label{prop:two-mechanisms-imagination}
Fix a current conceptual subspace $\mathcal C\subseteq\mathcal H$.
Let $D_{\mathrm{ext}}$ denote external experience and
$D_{\mathrm{sim}}$ internally simulated experience, both finite multisets.
Write $D_{\mathrm{all}}:=D_{\mathrm{ext}}\uplus D_{\mathrm{sim}}$ and define the
residual spans (with respect to $\mathcal C$)
\[
W_{\mathrm{ext}}
:=\mathrm{span}\{r_{\mathcal C}(u):u\in D_{\mathrm{ext}}\},
\qquad
W_{\mathrm{all}}
:=\mathrm{span}\{r_{\mathcal C}(u):u\in D_{\mathrm{all}}\}.
\]
Then internal simulation can influence MDL-driven conceptual growth in exactly
two distinct ways:
\begin{enumerate}
\item \emph{(Directional enrichment)}  
If $W_{\mathrm{all}}\supsetneq W_{\mathrm{ext}}$, internal simulation enlarges the
set of residual-supported \emph{candidate novelty directions} relative to those
available from external experience alone.

\item \emph{(Threshold amplification)}  
Even if $W_{\mathrm{all}}=W_{\mathrm{ext}}$, internal simulation may increase the
aggregate reduction in description length along existing residual directions,
thereby changing the MDL acceptance decision for an admissible update.
\end{enumerate}
\end{proposition}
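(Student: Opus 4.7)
The plan is to treat the two claims independently, since both reduce almost immediately to results already established in the appendix: the residual-supported dominance property (Proposition~\ref{prop:residual_supported_dominance}) for part~(1), and the MDL acceptance threshold (Proposition~\ref{prop:mdl-threshold-general}) for part~(2). Neither requires a new technical lemma; the task is rather to make the quantifier structure precise and to exhibit witnesses where existence is claimed.

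For \emph{directional enrichment}, I would proceed as follows. By Proposition~\ref{prop:residual_supported_dominance}, any MDL-accepted admissible extension can be replaced without loss by one whose novelty lies in the residual span. The pool of residual-supported one-dimensional candidate extensions at state $\mathcal C$ with respect to a dataset $D$ is therefore in bijection with the unit sphere of the associated residual span. By hypothesis $W_{\mathrm{all}}\supsetneq W_{\mathrm{ext}}$, so there exists a unit vector $v\in W_{\mathrm{all}}\setminus W_{\mathrm{ext}}$. Under the closure property of $\mathcal T(\mathcal C)$ in Definition~\ref{def:admissible_extensions}, the extension $\mathcal C\oplus\mathrm{span}\{v\}$ is admissible and residual-supported for $D_{\mathrm{all}}$ but not for $D_{\mathrm{ext}}$ alone. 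This exhibits a candidate novelty direction made available purely by simulation.

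For \emph{threshold amplification}, I would use Proposition~\ref{prop:mdl-threshold-general}, which states that a one-dimensional extension $\mathcal C\oplus\mathrm{span}\{v\}$ is MDL-accepted exactly when the aggregate loss reduction along $v$ exceeds $\lambda$. Since each summand is nonnegative and the loss $\ell$ is nondecreasing, the aggregate reduction is monotone in the multiset $D$. To turn the qualitative ``may'' into a concrete existence statement, I would exhibit a witness scenario in which (i) the aggregate reduction along some $v\in W_{\mathrm{ext}}$ lies below $\lambda$ under $D_{\mathrm{ext}}$, and (ii) simulated vectors whose residuals lie in $\mathrm{span}\{v\}\subseteq W_{\mathrm{ext}}$ are appended to $D_{\mathrm{sim}}$ in sufficient number (or magnitude) to push the aggregate above $\lambda$. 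By construction $W_{\mathrm{all}}=W_{\mathrm{ext}}$, so the effect is purely one of amplification along an already-present direction. The worked example in the main text already provides such a witness in the quadratic case $\ell(x)=x$.

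The main obstacle, such as it is, is bookkeeping rather than depth: one must verify that the simulated residuals genuinely lie in $W_{\mathrm{ext}}$ so that the premise $W_{\mathrm{all}}=W_{\mathrm{ext}}$ is preserved, which requires choosing $u_{\mathrm{sim}}$ whose component orthogonal to $\mathcal C$ is contained in $W_{\mathrm{ext}}$. A minor additional point is that for general nondecreasing $\ell$ (not strictly increasing), one should note that if $\ell$ is flat on the relevant range then amplification may fail to change the threshold decision; this is consistent with the proposition as stated, since the claim is only that simulation \emph{may} shift the decision. I would therefore conclude the proof by noting that strict amplification is guaranteed whenever $\ell$ is strictly increasing at the aggregated residual magnitudes encountered, which covers the canonical quadratic case and all natural choices in the MDL literature.
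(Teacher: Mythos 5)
Your proposal follows essentially the same route as the paper's proof: part~(1) is reduced to the residual-supported dominance structure (Proposition~\ref{prop:residual_supported_dominance} / Theorem~\ref{thm:cbe}), and part~(2) to the threshold characterization of Proposition~\ref{prop:mdl-threshold-general} together with additivity of the aggregate improvement over disjoint multisets. The observations you add --- that one must keep simulated residuals inside $W_{\mathrm{ext}}$ to preserve $W_{\mathrm{all}}=W_{\mathrm{ext}}$, and that amplification can stall when $\ell$ is locally flat --- are both correct and slightly more explicit than what the paper writes; the paper's proof handles the flatness point implicitly via the clause ``with strict inequality whenever simulation produces at least one instance for which the extension strictly reduces residual norm.''

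One small over-claim in your part~(1): you assert that the closure property of Definition~\ref{def:admissible_extensions} makes $\mathcal C\oplus\mathrm{span}\{v\}$ admissible for an arbitrary unit vector $v\in W_{\mathrm{all}}\setminus W_{\mathrm{ext}}$. That closure axiom only guarantees that if $\mathcal C\oplus U\in\mathcal T(\mathcal C)$ then the stripped-down $\mathcal C\oplus(U\cap W)$ is also in $\mathcal T(\mathcal C)$; it does not populate $\mathcal T(\mathcal C)$ with every one-dimensional subspace of $W_{\mathrm{all}}$. The paper sidesteps this by phrasing the conclusion in terms of the set of residual-supported \emph{candidate} directions ``permitted by the data,'' rather than asserting admissibility of any particular extension. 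Either soften your statement to match, or add an explicit hypothesis that $\mathcal T(\mathcal C)$ contains all one-dimensional residual-supported extensions.
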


\begin{proof}
We begin by noting that $D_{\mathrm{ext}}\subseteq D_{\mathrm{all}}$ (as multisets)
implies $W_{\mathrm{ext}}\subseteq W_{\mathrm{all}}$.

\paragraph{(1) Directional enrichment.}
Assume $W_{\mathrm{all}}\supsetneq W_{\mathrm{ext}}$. Then there exists
$v\in W_{\mathrm{all}}\setminus W_{\mathrm{ext}}$, that is, a residual-supported
direction present in the combined dataset but absent from the residual span
induced by external experience alone.

By Theorem~\ref{thm:cbe} applied to $D_{\mathrm{all}}$, whenever an admissible
extension is MDL-accepted on $D_{\mathrm{all}}$, there exists an MDL-accepted
admissible update whose novelty subspace can be chosen to lie entirely within
$W_{\mathrm{all}}$. Since $W_{\mathrm{all}}$ strictly contains $W_{\mathrm{ext}}$,
the set of residual-supported \emph{candidate} novelty directions permitted by
the data is strictly larger than under $D_{\mathrm{ext}}$ alone. This constitutes
directional enrichment.

\paragraph{(2) Threshold amplification.}
Assume $W_{\mathrm{all}}=W_{\mathrm{ext}}$. Let
$\mathcal C'=\mathcal C\oplus\mathrm{span}\{b_{\mathrm{new}}\}$ be any admissible
one-dimensional extension, and define the aggregate improvement functional for a
dataset $D$ by
\[
G_D(\mathcal C')
:=
\sum_{u\in D}
\Bigl[
\ell(\|r_{\mathcal C}(u)\|^2)
-
\ell(\|r_{\mathcal C'}(u)\|^2)
\Bigr].
\]
By Proposition~\ref{prop:mdl-threshold-general}, $\mathcal C'$ is MDL-accepted on
$D$ if and only if $G_D(\mathcal C')>\lambda$.

Since $D_{\mathrm{all}}=D_{\mathrm{ext}}\uplus D_{\mathrm{sim}}$, additivity yields
\[
G_{D_{\mathrm{all}}}(\mathcal C')
=
G_{D_{\mathrm{ext}}}(\mathcal C')
+
G_{D_{\mathrm{sim}}}(\mathcal C').
\]
Because $\mathcal C\subseteq\mathcal C'$, we have
$\|r_{\mathcal C'}(u)\|\le \|r_{\mathcal C}(u)\|$ for all $u$, and since $\ell$ is
nondecreasing, each summand in $G_{D_{\mathrm{sim}}}(\mathcal C')$ is nonnegative.
Thus
\[
G_{D_{\mathrm{all}}}(\mathcal C')\ge G_{D_{\mathrm{ext}}}(\mathcal C'),
\]
with strict inequality whenever simulation produces at least one instance for
which the extension strictly reduces residual norm.

Consequently, it is possible that an admissible update fails to meet the MDL
acceptance threshold on external experience alone, but becomes accepted once
simulated experience is included. This effect does not require expansion of the
residual span, only an increase in the aggregate magnitude of residual-supported
evidence.
\end{proof}

\paragraph{Intuition.}
The proposition clarifies how imagination can influence learning under an
MDL criterion without appealing to any special cognitive primitives. Internal
simulation matters only insofar as it alters the geometry of residual error.
First, imagination may reveal \emph{new directions of failure} that are absent
from external experience, thereby enlarging the space of admissible conceptual
extensions. Second, even when imagination introduces no new directions, it can
\emph{reinforce existing ones} by accumulating additional evidence along
previously identified residual axes, pushing an otherwise marginal extension
past the MDL acceptance threshold. In this sense, imagination contributes either
by expanding the set of conceivable concepts or by strengthening the case for
adopting concepts that were already geometrically available but statistically
under-supported.



\end{document}